\newcommand{\vlen}{0.65}
\DeclareMathOperator{\grad}{grad}
\DeclareMathOperator{\vspan}{span}
\DeclareMathOperator{\Smooth}{Smooth}
\DeclareMathOperator*{\argmin}{arg\,min}
\newtheorem{theorem}{Theorem}
\newtheorem{lemma}{Lemma}
\newtheorem{proposition}{Proposition}
\theoremstyle{definition}
\newtheorem{definition}{Definition}
\theoremstyle{remark}
\newtheorem{remark}{Remark}
\newcommand{\cM}{\mathcal{M}}
\newcommand{\cL}{\mathcal{L}}
\newcommand{\cZ}{\mathcal{Z}}
\newcommand{\bR}{\mathbb{R}}
\newcommand{\parxi}[2][]{\partial_{#2}(\xi_{#1})}
\newcommand{\alternativeParametrisation}{\psi}
\newcommand{\alternativeParameter}{\theta}
\newcommand{\alternativeParameterSpace}{\Theta}
\newcommand{\alternativePartial}{{\partial}}
\newcommand{\alternativeG}{{G}}
\newcommand{\alternativeNabla}{\nabla}
\newcommand{\nablanatpgrad}{\widetilde\nabla}
\newcommand{\der}[2][]{\frac{\partial{#1}}{\partial{#2}}}
\newcommand{\obj}{\mathcal{O}}
\newcommand{\natgrad}{\grad^\cM_p \cL}  
\newcommand{\natgrada}[1]{\grad^\cM_{#1} \cL} 
\newcommand{\natpgrad}{\nablanatpgrad_\xi \cL} 
\newcommand{\natpmgrad}{d\phi_\xi\nablanatpgrad_\xi \cL}
\newcommand{\natpmgrada}[1]{d\phi_{#1}\nablanatpgrad_{#1}\cL} 
\newcommand{\natapmgrada}[1]{d\alternativeParametrisation_{{#1}}\nablanatpgrad_{{#1}}\cL}
\newcommand{\natapgrad}{\nablanatpgrad_\alternativeParameter \cL}
\newcommand{\natapmgrad}{d\alternativeParametrisation_\alternativeParameter\nablanatpgrad_\alternativeParameter \cL}
\title{Invariance Properties of the Natural Gradient in Overparametrised Systems\footnote{A previous version of this paper \cite{oostrum2021parametrisation} was presented at the International Conference on Geometric Science of Information 2021 in Paris. Here  invariance properties for proper parametrisations were discussed. The current paper treats invariance for both proper and non-proper parametrisations.}}
\author[1]{Jesse van Oostrum}
\author[2]{Johannes Müller}
\author[1,3,4]{Nihat Ay}
\affil[1]{Hamburg University of Technology, Hamburg, Germany}
\affil[2]{Max Planck Institute for Mathematics in the Sciences, Leipzig, Germany}
\affil[3]{Leipzig University, Leipzig, Germany}
\affil[4]{Santa Fe Institute, Santa Fe, USA}
\date{}
\begin{document}

\maketitle
 
\begin{abstract}
    The natural gradient field is a vector field that lives on a model equipped with a distinguished Riemannian metric, e.g. the Fisher-Rao metric, and represents the direction of steepest ascent of an objective function on the model with respect to this metric. In practice, one tries to obtain the corresponding direction on the parameter space by multiplying the ordinary gradient by the inverse of the Gram matrix associated with the metric. We refer to this vector on the parameter space as the natural parameter gradient. In this paper we study when the pushforward of the natural parameter gradient is equal to the natural gradient. Furthermore we investigate the invariance properties of the natural parameter gradient. Both questions are addressed in an overparametrised setting. 
\end{abstract}

\section{Introduction}
Within the field of deep learning, gradient methods have become ubiquitous tools for parameter optimisation. Standard gradient optimisation procedures use the vector of coordinate derivatives of the objective function as the update direction of the parameters. This is implicitly assuming a Euclidean geometry on the space of parameters. It can be argued that this is not always the most natural choice of geometry. Instead one can choose a more natural geometry for the problem at hand and then determine the Riemannian gradient of the objective function for this natural geometry, resulting in the so called \emph{natural gradient}. The \emph{natural gradient method} is the optimisation algorithm that performs discrete parameter updates in the direction of the natural gradient. This method was first proposed by Amari \cite{amari1998} using the geometry induced by the Fisher-Rao metric. It is an active field of study within information geometry \cite{ay2020,lin2021tractable,grosse2016kronecker} and has been shown extremely effective in many applications \cite{ay2013selection,van2012reinforcement,varady2020}. More recently, also other geometries on the model have been studied, such as the Wasserstein geometry \cite{malago2018wasserstein, li2018natural}. The natural gradient is defined independently of a specific parametrisation.  Although it is an open problem, there is work supporting the idea that the efficiency of learning of the method is due to this invariance \cite{zhang2019fast}. 

In practice, the update direction of the parameters is given by the ordinary gradient multiplied by the inverse of the Gram matrix associated with the metric on the model. We will refer to this vector on the parameter space as the \emph{natural parameter gradient}. In order to determine whether this direction is desired we have to map this vector to the model, since it is the location on the model, not on the parameter space, that determines the performance of the model. In non-overparametrised systems it can be shown that in a non-singular point on the model, the pushforward of the natural parameter gradient is equal to the natural gradient. Furthermore the natural parameter gradient can be called parametrisation invariant in this case \cite{martens2020new}. In many practical applications of machine learning, and in particular deep learning, one deals however with overparametrised models, in which different directions on the parameter space correspond to a single direction on the model. In this case, the Gram matrix is degenerate and we use a generalised inverse to calculate the natural parameter gradient. In this paper, we will investigate whether the pushforward of the natural parameter gradient remains equal to the natural gradient in the overparametrised setting. 

The Moore-Penrose (MP) inverse is the canonical choice of generalised inverse for the natural parameter gradient \cite{bernacchia2019exact}. The definition of the MP inverse is based on the Euclidean inner product defined on the parameter space. Using the MP inverse is therefore thought to affect the parametrisation invariance of the natural parameter gradient  \cite{ollivier2015}, and thus potentially the performance of the natural gradient method. In this paper we propose two different notions of invariance. The first evaluates the invariance of the natural parameter gradient by examining the behaviour of its pushforward on the model. The second looks at the behaviour on the parameter space itself. Since the location and direction on the model is what matters, we argue that the former is of greater importance. 

\section{The natural gradient} \label{main}

Let $(\cZ, g)$ be a Riemannian manifold, $\Xi$ be a parameter space that we assume to be an open subset of $\bR^d$, $\phi\colon \Xi \to \cZ$ a smooth map (taking the role of a parametrisation\footnote{Within the context of differential geometry, a parametrisation usually means a local diffeomorphism onto its image. This is also referred to as a coordinate system. In our context, we use a more general definition of parametrisation, where $\phi\colon \Xi \to \cZ$ no longer needs to be a diffeomorphism onto its image but only smooth.}), $\mathcal{M} \coloneqq \phi(\Xi) \subset \cZ$ a model\footnote{Note that in the literature this is also called a parametrised model, statistical manifold, or in the context of machine learning, a neuromanifold. We choose the word model, however, to emphasize that we do not assume a manifold structure on $\cM$.}, and $\cL\colon \cZ \to \bR$ a smooth (objective) function (see Figure \ref{param1a}). We call $p \in \mathcal{M}$ \textit{non-singular} if $\mathcal{M}$ is locally an embedded submanifold of $\cZ$ around $p$ and we denote the set of non-singular points with $\Smooth(\mathcal{M})$. A point $p$ is called \textit{singular} if it is not non-singular. The Riemannian gradient of $\cL$ on $\cZ$ is defined implicitly as follows:  
\begin{equation} \label{grad0}
    g_p\left(\grad_p^\cZ \mathcal{L}, \cdot\right) = d\mathcal{L}_p(\cdot).
\end{equation}
By the Riesz representation theorem, this defines the gradient uniquely. 
\begin{definition}[Natural gradient]
    For $p \in \Smooth(\cM)$ the Riemannian gradient of  $\mathcal{L}|_\cM$ on the model $\cM$ is called the \emph{natural gradient} and is denoted $\natgrad$.
\end{definition}
\noindent It is easy to show that:
\begin{equation}
    \natgrad  = \Pi_p (\grad^\cZ_p \cL), 
\end{equation}
where $\Pi_p$ is the projection onto $T_p\cM$. We define the pushforward of the tangent vector on the parameter space through the parametrisation as $\parxi{i} \coloneqq d\phi_\xi \left(\left.\frac{\partial}{\partial \xi^i}\right|_\xi\right)$, and the Gram matrix $G(\xi)$ by $G_{ij}(\xi) \coloneqq g_{\phi(\xi)}\left(\parxi{i}, \parxi{j}\right)$. We denote the vector of coordinate derivatives with $\nabla_\xi \cL \coloneqq \left(\partial_1(\xi) \cL, ..., \partial_d(\xi)\cL\right) = \left(\frac{\partial \cL \circ \phi}{\partial \xi^1}(\xi),..., \frac{\partial \cL \circ \phi}{\partial \xi^d}(\xi)\right) \in \bR^d$. Let $\xi$ be such that $\phi(\xi) \in \Smooth(\mathcal{M})$. We say that a parametrisation is \textit{proper} in $\xi$ when: $\vspan\left(\left\{\parxi{1}, ..., \parxi{d}\right\}\right) = T_{\phi(\xi)}\mathcal{M}$.  Furthermore, following the Einstein summation convention, we write $a^i b_i$ for the sum $\sum_i a^i b_i$. 

\tikzstyle{dot} = [fill, circle, inner sep=0, minimum size=1.4mm]
\tikzstyle{gridline} = [gray!50, very thin]
  \begin{figure}
  
    \tikzset{ 
    seagull/.pic={
    \draw [gridline] (-.8,.5) to [bend left=3] (.9,.5);
    \draw [gridline] (-1,0) to [bend left=3] (.7,0);
    \draw [gridline] (-1.2,-.5) to [bend left=3] (.5,-.5);
    \draw [gridline] (-1., -.7) to [bend left=10](-.45, .7);
    \draw [gridline] (-.4, -.7) to [bend left=10](0.15, .7);
    \draw [gridline] (0.2, -.7) to [bend left=10](0.75, .7);
    }
}
    \begin{tikzpicture}[auto]
    \node (xi) at (0, 0) [dot, label=below left:$\xi$] {};
    \node (pp) at (6,0) {}; 
    \node (R) at (10, 0) {$\mathbb{R}$};
    \node (p) at ($(pp)+ (-.3,.05)$) [dot, label=below left:$p$] {}; 
    \node at ($(xi) + (1.4,1.2)$) {$\Xi$};
    \node at ($(p)+(1.8,1.2)$) {$\mathcal{Z}$};
    \node at ($(p) +(1.8, .2)$) {$\mathcal{M}$};
   
    \draw [step=.9cm, gridline] ($(xi)+(-1.2,-1.2)$) grid ($(xi)+(1.2,1.2)$);
    \pic [scale=1.5] at (pp) {seagull};
    
    \draw [bend right, thick, gray] (p) to  +(-1.5,0);
        \draw [bend right, thick, gray] (p) to  +(1.5,0);

    \draw [->] (xi) -- +(0,.7) node [anchor=south west, inner sep = 0, node font=\small] {$\der{\xi^2}|_\xi$};
    \draw [->] (xi) -- +(0.7,0) node [circle, anchor=north west, inner sep=0, node font=\small] {$\der{\xi^1}|_\xi$};
    \draw [->] (p) -- ($(p)+(330:1.1)$) node [anchor=north west, inner sep = 1, node font=\small] {$\partial_1(\xi)$};
    \draw [->] (p) -- ($(p)+(150:.9)$) node [anchor=south, inner sep = 3, node font=\small] {$\partial_2(\xi)$};

    \draw [->, black!80, very thin, bend left=0] ($(xi) + (1.7,0)$) to node {$\phi$} ($(pp) + (-2.5,0)$);
    \draw [->, black!80, very thin, bend left=0] ($(pp) + (2,0)$) to node {$\mathcal{L}$} ($(R) + (-.5,0)$);

    \node at (0, 0) [dot] {};
    \node at ($(pp)+ (-.3,.05)$) [dot] {}; 

    \end{tikzpicture}
    \caption{Parametrisation and objective function}
    \label{param1a}
    \end{figure}
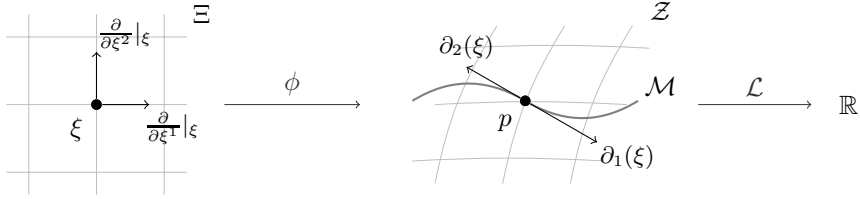

\begin{definition}[Generalised inverse]
    A \emph{generalised inverse} of an $n \times m$ matrix $A$, denoted $A^+$, is an $m \times n$ matrix satisfying the following property:
    \begin{equation}
        A A^+ A = A.
    \end{equation} 
\end{definition}
\noindent Note that this definition implies that for $w \in \bR^n$ in the image of A, i.e. $w = Av$ for some $v \in \bR^m$, we have: 
\begin{equation}
    AA^+ w = AA^+ A v = Av = w.
\end{equation}
This shows that $A A^+$ is the identity operator on the image of $A$.
\begin{definition}[Natural parameter gradient]
    We define the \emph{natural parameter gradient} to be the following vector on the parameter space:
    \begin{equation} \label{natpgrad}
         \natpgrad \coloneqq  \left(G^+(\xi) \nabla_\xi \cL \right)^i \left.\der{\xi^i}\right|_\xi. 
    \end{equation}
    The pushforward of this vector, given by:  
    \begin{equation} \label{natpmgrad}
        \natpmgrad = \left(G^+(\xi) \nabla_\xi \cL \right)^i \partial_i,
   \end{equation}
    is called the \emph{natural parameter gradient on $\cM$}. 
\end{definition}
\noindent Often, the natural parameter gradient is denoted with matrix notation as follows:
\begin{equation} 
    \natpgrad \coloneqq  G^+(\xi) \nabla_\xi \cL,
\end{equation}
where an identification between the canonical basis of $\bR^d$ and the vectors $\left.\der{\xi^i}\right|_\xi$ is made implicitly. \\

We are now in the position to state the main result of the paper:
\begin{theorem} \label{theorem1}
    Let $\xi \in \Xi$ and $p = \phi(\xi) \in \cM$. We have:
    \begin{equation} \label{thm1b}
        \natpmgrad = \Pi_\xi \left(\grad_p^\cZ \cL\right) ,
    \end{equation}
    where $\Pi_\xi$ is the projection onto $\vspan\{\partial_i(\xi)\}_i$. In particular, when $\phi(\xi)$ is non-singular and  $\vspan\{\partial_i(\xi)\}_i = T_p\cM$ we have:
     \begin{equation} \label{thm1}
        \natpmgrad = \natgrad.
    \end{equation}
\end{theorem}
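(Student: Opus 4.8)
The plan is to verify directly the variational characterisation of the orthogonal projection. Write $V := \vspan\{\partial_i(\xi)\}_i \subseteq T_p\cZ$ and $w := \natpmgrad$. By definition of the orthogonal projection, $\Pi_\xi(\grad_p^\cZ\cL)$ is the unique element of $V$ satisfying $g_p(\Pi_\xi(\grad_p^\cZ\cL), v) = g_p(\grad_p^\cZ\cL, v)$ for every $v \in V$. Since $w$ visibly lies in $V$, it suffices to show that $w$ satisfies this same pairing identity tested against each spanning vector $\partial_j(\xi)$.

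First I would unwind both sides for $v = \partial_j(\xi)$. On the right, equation \eqref{grad0} together with the chain rule gives $g_p(\grad_p^\cZ\cL, \partial_j(\xi)) = d\cL_p(\partial_j(\xi)) = d\cL_p(d\phi_\xi(\tfrac{\partial}{\partial\xi^j}|_\xi)) = (\nabla_\xi\cL)_j$. On the left, substituting the definition of $w$ and using $g_p(\partial_i(\xi), \partial_j(\xi)) = G_{ij}(\xi)$ with the symmetry of $G$ yields $g_p(w, \partial_j(\xi)) = (G^+(\xi)\nabla_\xi\cL)^i G_{ij}(\xi) = (G(\xi) G^+(\xi) \nabla_\xi\cL)_j$. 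Thus the entire statement \eqref{thm1b} reduces to the identity $G(\xi) G^+(\xi) \nabla_\xi\cL = \nabla_\xi\cL$.

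The main obstacle — and the only point where the overparametrised (degenerate) structure enters — is establishing that $\nabla_\xi\cL$ lies in $\operatorname{Im} G(\xi)$; once this is known, the remark following the definition of the generalised inverse (that $G G^+$ restricts to the identity on $\operatorname{Im} G$) closes the argument. I would prove membership by decomposing $\grad_p^\cZ\cL = \Pi_\xi(\grad_p^\cZ\cL) + r$ with $r$ orthogonal to $V$, and writing $\Pi_\xi(\grad_p^\cZ\cL) = c^i \partial_i(\xi)$ for some $c \in \bR^d$. Pairing with $\partial_j(\xi)$ annihilates $r$ and gives $(\nabla_\xi\cL)_j = c^i G_{ij}(\xi) = (G(\xi)c)_j$, so $\nabla_\xi\cL = G(\xi) c \in \operatorname{Im} G(\xi)$. (Equivalently, one verifies $\ker G(\xi) = \ker d\phi_\xi$ from positive-definiteness of $g$, whence $\operatorname{Im} G(\xi) = (\ker d\phi_\xi)^\perp$, and then checks that $\nabla_\xi\cL$ annihilates $\ker d\phi_\xi$ since $g_p(\grad_p^\cZ\cL, d\phi_\xi(v)) = 0$ whenever $d\phi_\xi(v) = 0$.)

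Finally, for the specialised statement \eqref{thm1}, I would observe that when $p$ is non-singular and the parametrisation is proper at $\xi$ we have $V = T_p\cM$, so $\Pi_\xi$ coincides with the projection $\Pi_p$ onto $T_p\cM$. Substituting into \eqref{thm1b} and invoking the identity $\natgrad = \Pi_p(\grad_p^\cZ\cL)$ already recorded in the text then gives $\natpmgrad = \natgrad$.
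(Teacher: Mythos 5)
Your proof is correct and follows essentially the same route as the paper's: the paper packages your computation as a linear-algebra lemma (testing the candidate vector against the spanning set via the Gram matrix and the identity $G G^+ G = G$), and then specialises to $\vspan\{\partial_i(\xi)\}_i = T_p\cM$ exactly as you do. Your explicit verification that $\nabla_\xi \cL \in \operatorname{Im} G(\xi)$ spells out a point the paper's lemma compresses into the identity $\omega(e_i) = G_{ij}v^j$, but the underlying argument is the same.
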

\noindent This theorem implies that under certain conditions the pushforward of the natural parameter gradient is equal to the natural gradient. Furthermore we see that in general the natural parameter gradient on $\cM$ is dependent on the choice of parametrisation through $\Pi_\xi$, but becomes invariant when the coordinate vectors span the full tangent space of $\cM$. In the next section we will study in more detail the invariance properties of the natural parameter gradient.\\

\noindent The proof of Theorem \ref{theorem1} will be based on the following result from linear algebra: 
\begin{lemma}
    Let $(V, \langle \cdot, \cdot \rangle)$ be a finite-dimensional inner product space and $V^*$ its dual space. Let $\{e_i\}_{i \in \{1,...,d\}} \subset V$ (not necessarily linearly independent), $G$ the matrix defined by $G_{ij} = \langle e_i, e_j \rangle$, $\omega \in V^*$, $v$ such that $\langle v, \cdot \rangle = \omega(\cdot)$ and $\Pi$ the projection on the space $\vspan\{e_i\}_i$. Then, 
    \begin{equation} \label{projv}
        \Pi(v) = \left(G^+\right)^{ij}\omega(e_j)e_i.
    \end{equation}
\end{lemma}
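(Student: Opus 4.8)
The plan is to characterise the orthogonal projection $\Pi(v)$ by its two defining properties and then verify that the proposed expression satisfies them. Recall that $\Pi(v)$ is the unique vector in $W := \vspan\{e_i\}_i$ for which $v - \Pi(v) \perp W$, equivalently $\langle v - \Pi(v), e_k\rangle = 0$ for every $k$. Setting $b_k := \omega(e_k) = \langle v, e_k\rangle$, I would first translate these conditions into linear algebra: if $\Pi(v) = a^i e_i$, then orthogonality reads $a^i G_{ik} = b_k$, i.e.\ $Ga = b$ (using that $G$ is symmetric).

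The key observation, and the step I expect to be the main obstacle, is that $b$ lies in the image of $G$. This fails for an arbitrary covector, but holds here because $b_k = \langle v, e_k\rangle = \langle \Pi(v), e_k\rangle$ and $\Pi(v)$ itself lies in $W$; writing $\Pi(v) = a^i e_i$ yields $b = Ga \in \operatorname{Im}(G)$. This is precisely where the hypothesis that $v$ is the Riesz representative of $\omega$ is used, and it is what makes the (possibly singular) Gram matrix amenable to a generalised inverse.

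Once $b \in \operatorname{Im}(G)$ is established, the remark following the definition of the generalised inverse gives $G G^+ b = b$. I would then directly verify that the candidate $u := (G^+)^{ij}\omega(e_j)e_i = (G^+ b)^i e_i$ satisfies both defining properties: it lies in $W$ by construction, and for every $k$,
\[
\langle u, e_k\rangle = (G^+)^{ij}\,\omega(e_j)\,\langle e_i, e_k\rangle = (G^+)^{ij} b_j\, G_{ik} = (G G^+ b)_k = b_k = \langle v, e_k\rangle,
\]
where the third equality again uses the symmetry of $G$. Hence $v - u \perp W$, and by uniqueness of the orthogonal projection $u = \Pi(v)$, which is the claim.

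A minor point worth recording is that $u$ is independent of the choice of generalised inverse $G^+$: although the coefficient vector $G^+ b$ may vary, any two solutions $c, c'$ of $Gc = b$ differ by an element of $\ker G$, and then $(c - c')^i e_i$ is simultaneously orthogonal to every $e_k$ and contained in $W$, so it vanishes. Thus the element of $W$ represented by the coefficients is well defined, consistent with the intrinsic nature of $\Pi(v)$.
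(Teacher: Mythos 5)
Your proof is correct and follows essentially the same route as the paper: both characterise $\Pi(v)$ by membership in $\vspan\{e_i\}_i$ together with the orthogonality (equivalently, duality) condition against the $e_k$, and both verify the candidate via the generalised-inverse identity, you through $GG^+b=b$ on the image and the paper through $G G^+ G = G$ after writing $\omega(e_j)=G_{jl}v^l$. Your explicit justification that $b=(\omega(e_k))_k$ lies in $\operatorname{Im}(G)$ makes precise a step the paper leaves implicit, and the closing remark on independence of the choice of $G^+$ is a welcome, if optional, addition.
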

\begin{proof}
    Start by noting $\Pi(v)$ is uniquely defined by the fact that $\langle \Pi(v), w \rangle = \omega(w)$, for $w \in \vspan\{e_i\}_i$ and $\langle \Pi(v), w \rangle = 0$ for $w \in \left(\vspan\{e_i\}_i\right)^\perp$. Since the RHS of \eqref{projv} lies in the span of $\{e_i\}_i$, it remains to show that for an arbitrary vector $w = w^i e_i \in \vspan\{e_i\}_i$ we have:
    \begin{equation}
        \langle \left(G^+\right)^{ij}\omega(e_j)e_i, w^k e_k \rangle = \omega(w).
    \end{equation}
    Working out the LHS gives:
    \begin{align}
        \langle \left(G^+\right)^{ij}\omega(e_j)e_i, w^k e_k \rangle &= G_{ik} \left(G^+\right)^{ij}\omega(e_j) w^k \\
        &= w^k G_{ki}\left(G^+\right)^{ij}G_{jl} v^l\\
        &= w^k G_{kl} v^l \\
        &= w^k \omega(e_k) \\
        &= \omega(w),
    \end{align}
    where we use the fact that: $\omega(e_i) = G_{ij}v^j$ and the symmetry of $G$ in the second equality.
    
\end{proof}

\begin{proof}[Proof of Theorem \ref{theorem1}]
We now let $T_p\cM$ take the role of $V$, $d\cL_p$ the role of $\omega$, $\parxi{i}$ the role of $e_i$, and $\grad_p \cL$ the role of $v$. Equation \eqref{thm1b} now follows immediately. When the tangent vectors $\{\partial_i(\xi)\}_i$ span the whole tangent space of $\cM$ at $p$, $\Pi_\xi$ becomes the identity on $T_p\cM$. This gives Equation \eqref{thm1}.
\end{proof}

\section{Invariance properties of the natural parameter gradient} \label{paraminv}

In this section we study the invariance properties of the natural parameter gradient by using an alternative parametrisation of $\cM$ given by: 
\begin{equation}
    \alternativeParametrisation\colon \alternativeParameterSpace \ni \alternativeParameter \mapsto \alternativeParametrisation(\alternativeParameter) \in \cZ.
\end{equation} 
Note that $G^+(\xi), \nabla_{\xi} \cL$ and $\parxi{i}$ in the definition of $\natpmgrad$ all implicitly depend on the parametrisation $\phi$. For an alternative parametrisation $\alternativeParametrisation$ we will therefore write: $\alternativePartial_i(\alternativeParameter) \coloneqq d\alternativeParametrisation_{\alternativeParameter} \left( \der{\alternativeParameter}|_{\alternativeParameter}\right)$, $\alternativeG_{ij}(\alternativeParameter) \coloneqq g_{\alternativeParametrisation(\alternativeParameter)} \left(\alternativePartial_i(\alternativeParameter), \alternativePartial_j(\alternativeParameter)\right)$, and $\alternativeNabla_{\alternativeParameter} \cL \coloneqq (\alternativePartial_1(\alternativeParameter) \cL, ..., \alternativePartial_d(\alternativeParameter)\cL)$. 

The invariance properties can be studied from the perspective of the model and from the perspective of the parameter space itself. Since the former is of more importance, we will start with this one.

\begin{figure} [ht]
    \centering
    \tikzstyle{dot} = [fill, circle, inner sep=0, minimum size=1.4mm]
    \tikzstyle{grid} = [gray!50, very thin, step=.5cm]
    
    \begin{tikzpicture} [auto, scale=1.2]
    \node (eta) at (-6, 0) [dot, label=below left:$\alternativeParameter$] {};
    \node (xi) at (0,0) [dot, label=below left:$\xi$]  {}; 
    \node (p) at (0,3.5) [dot, label=below left:$p$]  {}; 
    
    \draw [grid] ($(eta)+(-1,-1)$) grid ($(eta)+(1.001,1)$);
    \draw [grid] ($(xi)+(-1,-1)$) grid ($(xi)+(1,1)$); 
    \draw [bend right, thick, gray] (p) to  +(-1.5,0);
    \draw [bend right, thick, gray] (p) to  +(1.5,0);
    \node at ($(eta) + (1.4,1.2)$) {$\alternativeParameterSpace$};
    \node at ($(xi) + (1.4,1.2)$) {$\Xi$};
    \node at ($(p)+(2,0)$) {$\mathcal{M}$};
    
    \draw [->, black!80, very thin, bend right=15] ($(eta) + (1.5,0)$) to node {$f$} ($(xi) + (-1.5,0)$);
    \draw [->, black!80, very thin, bend left=30] ($(eta) + (0,1.5)$) to node {$\alternativeParametrisation$} ($(p) + (-2.2,0)$);
    \draw [->, black!80, very thin, bend right=0 ] ($(xi) + (0,1.5)$) to node [swap] {$\phi$} ($(p) + (0,-1)$); 
    
    \draw [->] (eta) -- +(0,.7) node [anchor=south west, inner sep = 0, node font=\small] {$\der{\alternativeParameter^2}|_{\alternativeParameter}$};
    \draw [->] (eta) -- +(0.7,0) node [circle, anchor=north west, inner sep=0, node font=\small] {$\der{\alternativeParameter^1}|_{\alternativeParameter}$};
    \draw [->] (xi) -- +(0,.7) node [anchor=south west, inner sep = 0, node font=\small] {$\der{\xi^2}|_\xi$};
    \draw [->] (xi) -- +(0.7,0) node [circle, anchor=north west, inner sep=0, node font=\small] {$\der{\xi^1}|_\xi$};
    \draw [->] (p) -- ($(p)+(330:1.1)$) node [anchor=north west, inner sep = 1, node font=\small] {$\partial_i(\xi)$};
    \draw [->] (p) -- ($(p)+(150:.9)$) node [anchor=south, inner sep = 3, node font=\small] {$\alternativePartial_i(\alternativeParameter)$};

    \node at (-6, 0) [dot] {};
    \node at (0,0) [dot]  {};

    
    \end{tikzpicture}  
    \caption{Two parametrisations of $\cM$}
    \label{twoparamsa}
  \end{figure}

\subsection{Parametrisation dependence and reparametrisation invariance on the model} \label{invman}
 A parametrisation can be used to represent tangent vectors on the model space by elements of $\bR^d$. A representation (of vectors on $\cM$) can be interpreted as the map $\obj \colon (\phi, \xi) \mapsto \obj(\phi, \xi) \in T_\xi\Xi \ (\cong \bR^d)$ that takes a parametrisation-coordinate pair and assigns a tangent vector on the parameter space to it. The natural parameter gradient defined by $\natpgrad = \left(G^+(\xi) \nabla_\xi \cL \right)^i \left.\der{\xi^i}\right|_\xi$ in Equation \eqref{natpgrad} is an example of a representation, where the dependence on $\phi$ on the RHS is implicit. Naively, one could define invariance of a representation in the following way:
 
\begin{definition}[Parametrisation independence]\label{parind}
    Let $\cM$ be a model. A representation $\obj(\cdot, \cdot)$ is called \emph{parametrisation independent} if for any pair $\phi, \alternativeParametrisation$ of parametrisations of $\cM$, and coordinates $\xi,\alternativeParameter$ such that $\alternativeParametrisation(\alternativeParameter) = \phi(\xi)$, the following holds: 
    \begin{equation}\label{inveq}
        d\alternativeParametrisation_{\alternativeParameter} \obj(\alternativeParametrisation,\alternativeParameter) = d\phi_\xi \obj(\phi,\xi).
    \end{equation}
\end{definition}
It turns out that this is not a very useful definition. As we will see, no non-trivial representation can be parametrisation independent in the sense of this definition. We will illustrate this in Example 1 and 2 below for the natural parameter gradient on specific models. A formal proof can be found in the Appendix \ref{def4lim}.\\ 
In order to overcome the limitation of Definition 4, we propose the following more suitable definition of invariance of a representation:
\begin{definition}[Reparametrisation invariance]\label{invrep}
    Let $\cM$ be a model. A representation $\obj(\cdot, \cdot)$ is called \emph{reparametrisation invariant} if for any pair $\phi, \alternativeParametrisation$ of parametrisations of $\cM$, such that $\alternativeParametrisation = \phi \circ f$ for a diffeomorphism $f\colon \alternativeParameterSpace \to \Xi$, and coordinates $\xi,\alternativeParameter$ such that $\alternativeParameter = f^{-1}(\xi)$, the equality \eqref{inveq} holds.
\end{definition}
Due to the extra requirement of the existence of the reparametrisation function $f$ in Definition \ref{invrep}, we get the following central result of this paper:  

\begin{theorem} \label{cor1}
     The natural parameter gradient is reparametrisation invariant. 
\end{theorem}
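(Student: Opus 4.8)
The plan is to show that under a reparametrisation $\alternativeParametrisation = \phi \circ f$ the natural parameter gradient on $\cM$ transforms correctly, i.e. that $\natapmgrad = \natpmgrad$ whenever $\alternativeParameter = f^{-1}(\xi)$. The cleanest route is to invoke Theorem \ref{theorem1}: it identifies the pushforward $\natpmgrad$ with $\Pi_\xi(\grad_p^\cZ \cL)$, the projection of the ambient gradient onto $\vspan\{\partial_i(\xi)\}_i$. Since this characterisation makes no reference to the parametrisation beyond the \emph{span} of the pushed-forward coordinate vectors, the whole argument reduces to showing that the two parametrisations induce the same projection operator at $p$, that is $\Pi_\xi = \Pi_{\alternativeParameter}$.

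First I would write out the coordinate vectors for the alternative parametrisation and relate them to those of $\phi$. By the chain rule applied to $\alternativeParametrisation = \phi \circ f$, we have $d\alternativeParametrisation_\alternativeParameter = d\phi_{f(\alternativeParameter)} \circ df_\alternativeParameter = d\phi_\xi \circ df_\alternativeParameter$, so each $\alternativePartial_i(\alternativeParameter)$ is a linear combination of the $\partial_j(\xi)$ with coefficients given by the Jacobian $df_\alternativeParameter$. Because $f$ is a diffeomorphism, $df_\alternativeParameter$ is an invertible linear map, hence this linear combination is a change of basis (or more precisely a bijection of spanning sets): it follows immediately that $\vspan\{\alternativePartial_i(\alternativeParameter)\}_i = \vspan\{\partial_j(\xi)\}_j$. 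This is the key observation, and it is where the diffeomorphism hypothesis of Definition \ref{invrep} does its work — the same conclusion would fail for the naive Definition \ref{parind}, where $\phi$ and $\alternativeParametrisation$ need not be related by any map on parameter space and so may have different coordinate spans at $p$.

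Once the two spans coincide, the projections onto them are identical as operators on $T_p\cZ$, so $\Pi_\xi = \Pi_\alternativeParameter$. Applying Theorem \ref{theorem1} to each parametrisation then yields
\begin{equation}
    \natpmgrad = \Pi_\xi(\grad_p^\cZ \cL) = \Pi_\alternativeParameter(\grad_p^\cZ \cL) = \natapmgrad,
\end{equation}
which is exactly the equality \eqref{inveq} required by Definition \ref{invrep}. This completes the proof.

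I expect the main obstacle to be largely a matter of bookkeeping rather than genuine difficulty: the substantive content is carried entirely by Theorem \ref{theorem1}, so the risk is in justifying the span equality cleanly. One subtlety worth stating explicitly is that the argument does \emph{not} require the generalised inverse $G^+$ to transform in any particular way under reparametrisation — indeed the MP inverse is generally \emph{not} covariant — yet the pushforward to the model is invariant precisely because it depends on $G^+$ only through the projection, which is basis-independent. Making this point explicit is probably the most valuable part of the write-up, since it explains why reparametrisation invariance holds at the level of the model even though the natural parameter gradient on the parameter space itself need not be invariant.
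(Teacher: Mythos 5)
Your proof is correct and follows essentially the same route as the paper's: both reduce the claim to Theorem \ref{theorem1}'s identification of the pushforward with the projection $\Pi_\xi(\grad_p^\cZ\cL)$ and then observe that, since $df_{\alternativeParameter}$ is invertible, $\vspan\{\alternativePartial_i(\alternativeParameter)\}_i = \vspan\{\partial_j(\xi)\}_j$, so the two projections coincide. Your additional remark that the argument never requires $G^+$ itself to transform covariantly is a nice clarification but not a different argument.
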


\begin{proof}  
    By Definition \ref{invrep}, we need to show that for $\alternativeParametrisation = \phi \circ f$ and $\alternativeParameter = f^{-1}(\xi)$ we have:
    \begin{equation}
        d\alternativeParametrisation_{\alternativeParameter}\natapgrad = d\phi_\xi \natpgrad.
    \end{equation}
    Since the differential $df_{\alternativeParameter}$ is surjective, we have that $\vspan\{\partial_i(\xi)\}_i = \vspan\{\alternativePartial_j(\alternativeParameter)\}_j$. Therefore, by using equation Equation \eqref{thm1b} of Theorem \ref{theorem1}, we get:
    \begin{equation}
        d\alternativeParametrisation_{\alternativeParameter}\natapgrad = \Pi_\theta \left(\grad_{\psi(\theta)}^\cZ \cL\right) =  \Pi_\xi \left(\grad_{\phi(\xi)}^\cZ \cL\right) = d\phi_\xi \natpgrad,
    \end{equation}
    which is what we wanted to show. 
\end{proof}
\begin{remark} \label{remark:diffeomorphismAssumption}
    Note that under the extra assumptions that 
    $\cM$  is a smooth manifold and all parametrisations are required to be diffeomorphisms Definitions \ref{parind} and \ref{invrep} are equivalent, since the composition $f = \alternativeParametrisation \circ \phi^{-1}$ is a diffeomorphism. These assumptions are often implicitly made when referring to the invariance of the natural gradient. However, as we will see below, this is no longer the case in our more general setting. 
\end{remark}

\subsubsection*{Example 1}

This example will be of a graphical nature. Consider the parametrisation $\phi$ that is the composition of the 2 maps in Figure \ref{param5b}. Now let $\alternativeParametrisation\colon \Xi \to \cM$ be this parametrisation but with a 90 degree rotation around $\phi(\xi)$ applied before projecting down to $\cM$. 

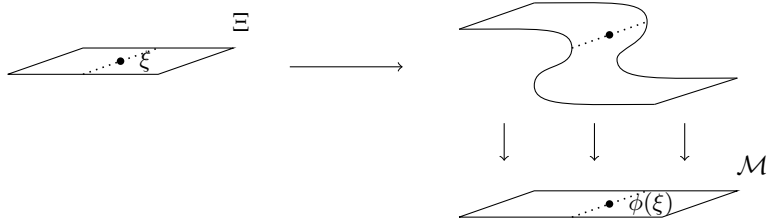
\begin{figure}[ht]
\centering
\tikzstyle{sdot} = [fill, circle, inner sep=0, minimum size=1mm]
\begin{tikzpicture}[scale = .5]


\node (a) at (0,0) {};
\node (b) at ($(a) + (3,-.5)$) {};
\node (c) at ($(b) + (-1, -1)$) {};
\node (d) at ($(c) + (3.2, -.5)$) {};

\draw (a.center) .. controls ($(a) + (2,0)$) and ($(b) + (0,.6)$) ..  (b.center) .. controls ($(b) + (0,-.6)$) and ($(c) + (0,.6)$) .. (c.center) .. controls ($(c) + (0,-.6)$) and ($(d) + (-2,0)$) .. (d.center);

\node (aa) at ($(a) + (2,.7)$) {};
\node (bb) at ($(aa) + (3,-.5)$) {};
\node (cc) at ($(bb) + (-.8, -1)$) {};
\node (dd) at ($(cc) + (3.2, -.5)$) {};

\draw (aa.center) .. controls ($(aa) + (2,0)$) and ($(bb) + (0,.6)$) ..  (bb.center) .. controls ($(bb) + (0,-.6)$) and ($(cc) + (0,.3)$) .. (cc.center) .. controls ($(cc) + (0,-.6)$) and ($(dd) + (-2,0)$) .. (dd.center);

\draw (a.center) to (aa.center);
\draw (d.center) to (dd.center);
\draw [dotted, semithick] (b.center) to (bb.center);

\node [sdot] at ($(b) + (1, .35)$) {};

\node (x) at (-12, -1.2) {};
\node at ($(x) + (6.2, 1.4)$) {$\Xi$};

\draw (x.center) to ($(x) + (4,0)$) to ($(x) + (6 ,.7)$) to ($(x) + (2,.7)$) to (x.center);

\draw [dotted, semithick] ($(x) + (2,0)$) to ($(x) + (4,.7)$);

\node [sdot,label={[label distance = .7mm, font=\small] right:$\xi$}] at ($(x) + (3, .35)$) {};

\node (y) at (0, -5) {};
\node at ($(y) + (7.8, 1.4)$) {$\cM$};

\draw (y.center) to ($(y) + (5.4,0)$) to ($(y) + (7.4,.7)$) to ($(y) + (2,.7)$) to (y.center);

\draw [dotted, semithick] ($(y) + (3,0)$) to ($(y) + (5,.7)$);
\node [sdot,label={[label distance = .7mm, font=\small] right:$\phi(\xi)$}] at ($(y) + (4, .35)$) {};

\draw[->] ($(b) + (-7.5,-.5)$) to ($(b) + (-4.5,-.5)$);
\draw[->] (1.2, -2.5) to (1.2, -3.5);
\draw[->] (3.6, -2.5) to (3.6, -3.5);
\draw[->] (6, -2.5) to (6, -3.5);

\end{tikzpicture}
\caption{Parametrisation with a non-surjective span of the parameter vectors}
\label{param5b}
\end{figure}

Note that the spans of the parameter vectors have trivial intersection as depicted in Figure \ref{param5c}. This immediately implies that for any representation we have: $ d\alternativeParametrisation_{\alternativeParameter} \obj(\alternativeParametrisation,\alternativeParameter) \neq d\phi_\xi \obj(\phi,\xi)$ except when both sides are equal to zero. In particular, we can let the natural gradient be as in Figure \ref{param5d}. We know from Theorem \ref{theorem1} that the natural parameter gradients on $\cM$ will be the projections of the natural gradient onto the respective spans of the parameter vectors as depicted in the figure. Note that the projection should be orthogonal with respect to the inner product $g_p$, which we have chosen here to be Euclidean for ease of illustration.

\begin{figure}[ht]
    \begin{subfigure}{0.5\textwidth}
    \centering
    \begin{tikzpicture}[scale=1.2]
    \node at (2.3, 1.8) {$\cM$};
    \draw [very thin]  (-2,-1.5) rectangle (2,1.5);
    \node[dot, label=below left: $\phi(\xi)$] at (0,0) {};
    \draw[dotted, semithick] (-2,0) to (2,0);
    \draw[dotted, semithick] (0, -1.5) to (0,1.5);
    \draw[->] (0,0) -- (0,.6) node [anchor=south west, node font=\small] {$\vspan\{d\phi_\xi(\der{\xi^i}|_\xi)\}_i$};
    \draw[->] (0,0) -- (0.6,0) node [anchor=155, node font=\small] {$\vspan\{d\psi_\xi(\der{\xi^j}|_\xi)\}_j$};
    \end{tikzpicture}
    \caption{Spans of both sets of parameter vectors}
    \label{param5c}
    \end{subfigure}
    \begin{subfigure}{0.5\textwidth}
    \centering
    \begin{tikzpicture}[scale=1.2]
    \node at (2.3, 1.8) {$\cM$};
    \draw [very thin]  (-2,-1.5) rectangle (2,1.5);
    \node[dot, label=below left: $\phi(\xi)$] at (0,0) {};
    \draw[dotted, semithick] (-2,0) to (2,0);
    \draw[dotted, semithick] (0, -1.5) to (0,1.5);
    \draw[->] (0,0) -- (1, .8) node [anchor= west, node font=\small] {$\natgrada{\phi(\xi)}$};
    \draw[->] (0,0) -- (0,.8) node [anchor=205, node font=\small, outer sep = 1.8] {$\natpmgrad$};
    \draw[->] (0,0) -- (1,0) node [anchor=140, node font=\small] {$\natapmgrada{\xi}$};
    \draw [dotted, thin] (0,.8) -- (1,.8) -- (1,0);
    \end{tikzpicture}
    \caption{Projected gradient vectors}
    \label{param5d}
    \end{subfigure}
    \caption{Spans and gradient vectors on the model of both parametrisations}
    \end{figure}
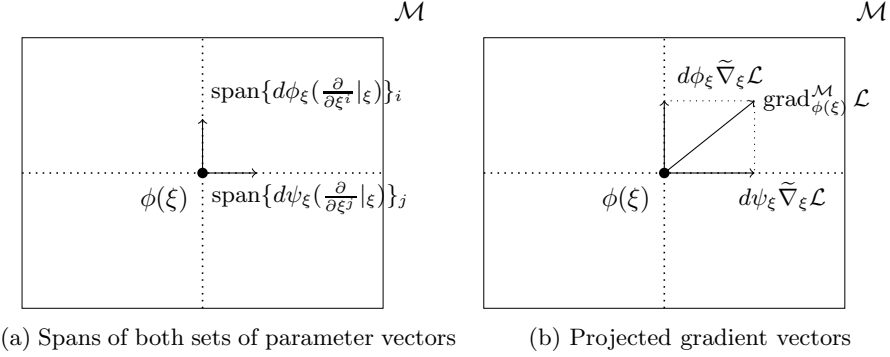
    
\noindent This example shows that for non-singular points, we can construct two parametrisations that give different natural parameter gradients on the same point of the model. Note that this is not in violation of Theorem \ref{cor1} since there does not exist a diffeomorphism $f$ such that $\alternativeParametrisation = \phi \circ f$.

\subsubsection*{Example 2}

\begin{figure}[ht]
\tikzstyle{dot} = [radius=.7mm]
\tikzstyle{vspace} = [line width = 3, #1!30, >=stealth, <->]
\begin{tikzpicture}[auto, scale=.95]
\draw [gray!50, very thin, step=1.01cm] (-2,-2) grid (2,2);
\node (a) at (-10,0) [label=below:] {};
\node (b) at (-5,0) [label=below:] {};
\node at ($(b)+(-.3,1)$) {$\Xi$};
\node at (1.5,2) {$\mathbb{R}^2$};
\node at (1.5, -.5) {$\mathcal{M}$};
\draw [vspace=blue] (-\vlen,-\vlen) -- (\vlen, \vlen);
\draw [vspace=red] (\vlen,-\vlen) -- (-\vlen, \vlen);
\begin{scope}[decoration={
    markings,
    mark=at position 0.5 with {\arrow[line width=1.5pt]{>}}}
    ] 
    \draw [postaction={decorate}] (a.center) -- (b.center);
    \draw [postaction={decorate}] [rounded corners=6.5mm] (-1,-1) -- (1,1) -- (0,2) -- (-1,1) -- (1,-1);
\end{scope}
\draw [->] (b.center) ++(0,2) to [bend left=20] node {$\phi$} (-2,2);
\fill (0,0) circle [dot] node [label=right:$p$]{};
\fill ($(a.center)!0.25!(b.center)$) circle [dot] node [label=below:$\xi_1$] {};
\fill ($(a.center)!0.75!(b.center)$) circle [dot] node [label=below:$\xi_2$] {};
\end{tikzpicture}
\caption{Parametrisation that contains a singular point}
\label{examparam}
\end{figure}

Let us consider the case in which $\phi$ is a smooth map from an interval on the real line to $\mathbb{R}^2$ as depicted in Figure \ref{examparam}. We have that $\xi_1$ and $\xi_2$ are both mapped to the same point $p$ in $\mathbb{R}^2$. Note that $\cM$ is in this case not a locally embedded submanifold around $p$ and thus $p$ is a singular point. Note that $G{(\xi_1})$ is a real number different from zero and therefore non-degenerate. Calculating the natural parameter gradient on $\cM$ for $\xi= \xi_1$ gives:
\begin{align} 
    \natpmgrada{\xi_1} &= G^+(\xi_1) \nabla_{\xi} \cL \ \parxi[1]{}\\
    &= G^{-1}(\xi_1) \frac{\partial \cL \circ \phi}{\partial \xi}(\xi_1) \parxi[1]{}. \label{grad1}
\end{align}
Since $G^{-1}(\xi_1) \frac{\partial \cL \circ \phi}{\partial \xi}(\xi_1)$ is  a scalar, the resulting vector will lie in the span of $\parxi[1]{}$ illustrated by the blue arrows in the figure.\\

Now let $f\colon\Theta \to \Xi$ be a diffeomorphism such that $f(\theta_1) = \xi_2$. An alternative parametrisation of $\cM$ is given by: 
\begin{equation}
    \alternativeParametrisation = \phi \circ f.
\end{equation}
Calculating the natural parameter gradient at $\theta_1$ for this parametrisation gives:
\begin{equation}
    \natapmgrada{\theta_1} = \alternativeG^{-1}({\theta_1}) \frac{\partial \cL \circ \alternativeParametrisation}{\partial \theta}(\theta_1) \partial(\theta_1).
\end{equation}
Note that this vector is in the span of $\partial(\theta_1)$ denoted by the red arrows in the figure and therefore in general different from \eqref{grad1}. This shows that when $\vspan\{\partial_i(\xi)\}_i \neq \vspan\{\alternativePartial_j(\alternativeParameter)\}_j$ the outcome of $\left(G^+(\xi) \nabla_{\xi} \cL \right)^i \parxi{i}$ can be dependent on the choice of parametrisation and therefore the natural parameter gradient is not parametrisation independent. Note however that this result is not in contradiction with Theorem \ref{cor1} since we do not have $\alternativeParameter_1 = f^{-1}(\xi_1)$. See Appendix \ref{exampleCalculationModel} for a worked-out example of this. \\

\subsection{Reparametrisation (in)variance on the parameter space} \label{paraminvparamspacessec}

In the previous section we have looked at the invariance properties of the natural parameter gradient from the perspective of the  model. One can also study the invariance properties from the perspective of the parameter space as is done for example in Section 12 of \cite{martens2020new}. Translating the definition of invariance given there to our notation gives the following:
\begin{definition}[Reparametrisation invariance on the parameter space]
    A representation $\obj(\cdot, \cdot)$ is called \emph{reparametrisation invariant on the parameter space} if for any pair of parametrisations $\phi, \alternativeParametrisation$ such that $\alternativeParametrisation = \phi \circ f$ for a diffeomorphism $f\colon \alternativeParameterSpace \to \Xi$, and coordinates $\xi,\alternativeParameter$ such that $\alternativeParameter = f^{-1}(\xi)$, we have:
    \begin{equation}\label{paraminvparamspace}
        df_{\alternativeParameter}\obj(\alternativeParametrisation,\alternativeParameter) = \obj(\phi,\xi).
    \end{equation}
\end{definition}
Note that reparametrisation invariance on the parameter space implies re\-parametrisation invariance on the model as defined in Definition \ref{invrep}. Furthermore, it can be shown that when $\cM$ is a smooth manifold and all parametrisations are required to be diffeomorphisms, like in Remark \ref{remark:diffeomorphismAssumption}, this definition is equivalent to Definitions \ref{parind} and \ref{invrep}. In that case, the natural parameter gradient satisfies Equation \eqref{paraminvparamspace}. As we will see below, this is not true for general $\phi$. We would like to argue, however, that this is not a suitable definition of invariance, since multiple vectors on the parameter space can be mapped to the same vector on the model. Therefore inequality on the parameter space does not have to imply inequality on the model.

\begin{figure} [ht]
    \centering
    \scalebox{1}{
    \tikzstyle{dot} = [fill, circle, inner sep=0, minimum size=1.4mm]
    \tikzstyle{grid} = [gray!50, very thin, step=.5cm]
    \begin{tikzpicture} [auto]
    \node (eta) at (-6, 0) [dot, label=below left:$\alternativeParameter$] {};
    \node (xi) at (0,0) [dot, label=below left:$\xi$]  {}; 
    \node (p) at (0,4) [dot, label=below left:$p$]  {}; 
    
    \draw [grid] ($(xi)+(-1,-1)$) grid ($(xi)+(1,1)$); 
    \draw [grid] ($(eta)+(-1,-1)$) grid ($(eta)+(1,1)$);
    \draw [bend right, thick, gray] (p) to  +(-1.5,0);
    \draw [bend right, thick, gray] (p) to  +(1.5,0);
    \node at ($(eta) + (1.4,1.2)$) {$\alternativeParameterSpace$};
    \node at ($(xi) + (1.4,1.2)$) {$\Xi$};
    \node at ($(p)+(2,0)$) {$\mathcal{M}$};
    
    \draw [->, black!80, very thin, bend right=15] ($(eta) + (1.5,0)$) to node {$f$} ($(xi) + (-1.5,0)$);
    \draw [->, black!80, very thin, bend left=30] ($(eta) + (0,1.5)$) to node {$\alternativeParametrisation$} ($(p) + (-2.2,0)$);
    \draw [->, black!80, very thin, bend right=0 ] ($(xi) + (0,1.5)$) to node [swap] {$\phi$} ($(p) + (0,-1)$);
    
    \draw [->] (xi) -- +(.95,-.20) node [anchor=west, inner sep = 3, node font=\small] {$\natpgrad$};
    \draw [->] (eta) -- +(.95,.45) node [anchor=west, inner sep = 3, node font=\small] {$\natapgrad$};
    \draw [->] (xi) -- +(.95,.65) node [anchor=west, inner sep = 3, node font=\small] {$df_{\alternativeParameter} \natapgrad$};
    \draw [->] (p) -- ($(p)+(330:1.4)$) node [anchor=north west, inner sep = 0, node font=\small, ] {\begin{tabular}{l}
         $d\phi_\xi \natpgrad =$ \\
         $d\phi_\xi df_{\alternativeParameter} \natapgrad$        
    \end{tabular}};
    \end{tikzpicture}  
    } 
    \caption{Two parametrisations of $\cM$ with different gradient vectors on the parameter space}
    \label{twoparams}
  \end{figure}
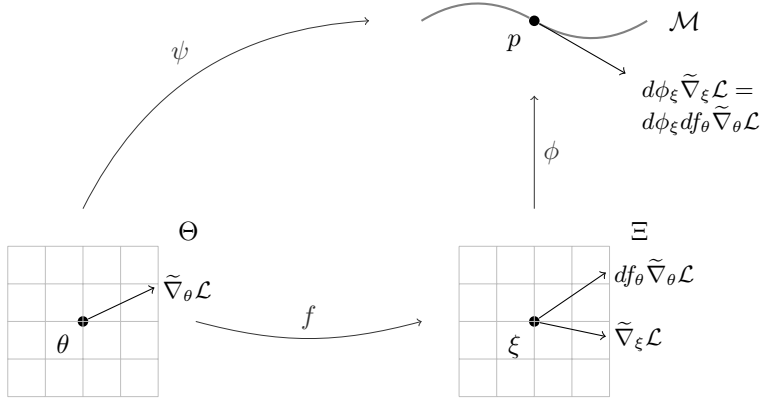

We will now make the above explicit. Let us choose the MP inverse as generalised inverse for $\natpgrad$ and consider an alternative parametrisation  $\alternativeParametrisation = \phi \circ f$ for a diffeomorphism $f\colon \alternativeParameterSpace \to \Xi$ (see Figure \ref{twoparams}). We denote the matrix of partial derivatives of $f$ at ${\alternativeParameter}$ with $F_i^j ({\alternativeParameter}) = \der[f^j]{{\alternativeParameter}^i}({\alternativeParameter})$. For $\xi = f({\alternativeParameter})$ we get the following relations:
\begin{align}
    \alternativePartial_i({\alternativeParameter}) &= F_{i}^j({\alternativeParameter}) \, \partial_j(\xi)\\
    \alternativeNabla_{\alternativeParameter} \cL &= F({\alternativeParameter}) \, \nabla_\xi \cL \\
    \alternativeG({\alternativeParameter}) &= F({\alternativeParameter}) \, G(\xi) \, F^T({\alternativeParameter}).
\end{align}
We map $\natapgrad$ to $T_\xi \Xi$ through $df_{\alternativeParameter}$ and get:
\begin{align}
    df_{\alternativeParameter} \natapgrad &=df_{\alternativeParameter}\left(\left(\alternativeG^+({\alternativeParameter})\alternativeNabla_{\alternativeParameter} \cL \right)^i\left.\frac{\partial}{\partial {\alternativeParameter}^i}\right|_{\alternativeParameter}\right) \\
    &= \left( \left(F({\alternativeParameter})G(\xi)F^T({\alternativeParameter})\right)^+ F({\alternativeParameter}) \, \nabla_\xi \cL \right)^i F_{i}^j({\alternativeParameter}) \left.\der{\xi^j}\right|_\xi \\
    &= \left( F^T({\alternativeParameter}) \left(F({\alternativeParameter})G(\xi)F^T({\alternativeParameter})\right)^+ F({\alternativeParameter}) \, \nabla_\xi \cL \right)^j \left.\der{\xi^j}\right|_\xi. \label{coef1}
\end{align}
We will write $y_\Xi, y_{\alternativeParameterSpace}$ for the coefficients of $\natpgrad$ and $df_{\alternativeParameter} \natapgrad$ respectively. From Theorem \ref{theorem1} and the fact that $F(\theta)$ is of full rank we know that $F(\theta) \, \nabla_\xi \cL$ lies in the image of $F(\theta)G(\xi)F^T(\theta)$. Therefore, by the definition of the MP inverse, we have that $\left(F(\theta)G(\xi)F^T(\theta)\right)^+ F(\theta) \, \nabla_\xi \cL = \argmin_x \{ ||x|| : F(\theta)G(\xi)F^T(\theta) x = F(\theta) \, \nabla_\xi \cL\}$, where $||\cdot||$ is the Euclidean norm on $\bR^d$.  The coefficients in \eqref{coef1} become:
\begin{align}
    y_{\alternativeParameterSpace} &= F^T({\alternativeParameter}) \left(F({\alternativeParameter})G(\xi)F^T({\alternativeParameter})\right)^+ F({\alternativeParameter}) \, \nabla_\xi \cL \\
    &= \argmin_y\left\{ ||\left(F^T({\alternativeParameter})\right)^{-1} y|| : G(\xi) y = \nabla_\xi \cL \right\} \label{yH},
\end{align}
where we substitute $y=F^T(\theta) x$ in the last line. 
\begin{remark}
Note that $||\left(F^T\right)^{-1} (\cdot)||$ is the pushforward of the norm on $\alternativeParameterSpace$ through $f$. This shows nicely the equivalence of the gradient for on the one hand constructing a different parametrisation $(\alternativeParametrisation)$, and on the other hand defining a different inner product $\left(||\left(F^T\right)^{-1} (\cdot)||\right)$ for the existing parametrisation $(\phi)$.
\end{remark}
Comparing the result to the natural parameter gradient on $\Xi$ gives:
\begin{align}
    \natpgrad &= \left( G^+(\xi) \nabla_\xi \cL\right)^i \left.\der{\xi^i}\right|_\xi\\
    &= (y_\Xi)^i \left.\der{\xi^i}\right|_\xi \\
    y_\Xi &= \argmin_y\{ ||y|| : G(\xi) y = \nabla_\xi \cL \} \label{yXi}.
\end{align}
Because the norms in \eqref{yH} and \eqref{yXi} are different, generally $y_{\alternativeParameterSpace} \neq y_\Xi$. However, both satisfy $G(\xi)y = \nabla_\xi \cL$ and therefore $G(\xi)(y_{\alternativeParameterSpace} - y_\Xi) = 0$. This implies:
\begin{align}
    d\phi_\xi \left(df_{\alternativeParameter} \natapgrad - \natpgrad \right) &=  \left(y_{\alternativeParameterSpace} - y_\Xi\right)^i \partial_i(\xi)\label{graddiff}\\ 
    &= 0,
\end{align}
where the last equality can be verified by taking the norm on the RHS of \eqref{graddiff} using that it is non-degenerate, like so: 
\begin{equation}
    ||\left(y_{\alternativeParameterSpace} - y_\Xi\right)^i \partial_i(\xi)||^2_g = \left(y_{\alternativeParameterSpace} - y_\Xi\right)^T G(\xi) \left(y_{\alternativeParameterSpace} - y_\Xi\right)=0.
\end{equation}
This shows that for overparametrised systems the natural parameter gradient is not reparametrisation invariant on the parameter space. However, as implied by Theorem \ref{theorem1}, the dependency on the parametrisation disappears when the gradient is mapped to the model. See Appendix \ref{exampleCalculationParameterSpace} for a worked-out example of the above discussion.

\section{Practical considerations for the natural gradient method}

The natural gradient method is performed by updating the current parameter vector $\xi \in \Xi$ in the direction of the vector $G^+(\xi) \nabla_\xi \cL \in \bR^d$. In case of a constrained parameter space, i.e. $\Xi$ is not the full space $\bR^d$, such as the space of covariance matrices, one runs the risk of stepping outside the parameter space, see also \cite{arjovsky2017}. This is called a constraint violation. One can use backprojection \cite{kushner1978}, addition of a penalty, and weight clipping \cite{gulrajani2017} to avoid these violations. Note that for a variety of neural network applications, including many supervised learning tasks, the parameter space is unconstrained. For these models however, the generalised inverse is often hard to compute due to the high number of parameters. In this context, the Woodbury matrix identity with damping is often used instead \cite{singh2020}. Investigating these topics further falls outside the scope of this paper. 

\subsubsection*{Reparametrisation (in)variance of the natural gradient method trajectory}
    We have shown in Theorem \ref{cor1} that from the perspective of the model, the natural parameter gradient is reparametrisation invariant. That is, for two parametrisations $\phi, \alternativeParametrisation$ for which $\alternativeParametrisation = \phi \circ f$ for a diffeomorphism $f$ and $\alternativeParameter = f^{-1}(\xi)$, we have that, 
    \begin{equation} \label{natgradmeth}
        \natpmgrad = \left(G^+(\xi) \nabla_\xi \cL \right)^i \partial_i = \left(\alternativeG^+(\alternativeParameter) \alternativeNabla_{\alternativeParameter} \cL \right)^j \alternativePartial_j = \natapmgrad.
    \end{equation}
    The natural gradient method is performed by updating the current parameter vector $\xi \in \bR^d$ in the direction of the vector $G^+(\xi) \nabla_\xi \cL \in \bR^d$. Equation \eqref{natgradmeth} implies that if we would update the parameters for both parametrisations an infinitesimal amount, this would give us the same result on the model. We would like to emphasise however that updating the parameters by a finite amount will in general result in different locations on the model. Therefore the natural gradient method trajectory is dependent on the choice of parametrisation. This is however not an issue specific to overparametrised models but with the natural gradient method in general. See Section 12 of \cite{martens2020new} for exact bounds on the invariance. \\
    
        
        

    \subsubsection*{Occurrence of non-proper points}

    We saw in the proof of Theorem \ref{cor1} that when $\vspan\{\partial_i(\xi)\}_i = \vspan\{\alternativePartial_j(\alternativeParameter)\}_j$ for two parametrisations $\phi$ and $\alternativeParametrisation$ with $\phi(\xi) = \alternativeParametrisation(\alternativeParameter)$, we have $\natpmgrad = \natapmgrad$. For $\phi(\xi) \in \Smooth(\cM)$ note that this equality holds in particular when $\vspan\{\partial_i(\xi)\}_i = \vspan\{\alternativePartial_j(\alternativeParameter)\}_j = T_p\cM$, i.e. $\phi$ is proper in $\xi$. Therefore we will now study when this is the case. We start by recalling some basic facts from smooth manifold theory: Let $M, N$ be smooth manifolds and $F:M \to N$ a smooth map. We call a point $p \in M$ a \textit{regular point} if $dF_p: T_p M \to T_{F(p)} N$ is surjective and a \textit{critical point} otherwise. A point $q \in N$ is called a \textit{regular value} if all the elements in $F^{-1}(q)$ are regular points, and a \textit{critical value} otherwise. If $M$ is $n$-dimensional, we say that a subset $S \subset M$ has measure zero in $M$, if for every smooth chart $(U, \alternativeParametrisation)$ for $M$, the subset $\alternativeParametrisation(S \cap U) \subset \mathbb{R}^{n}$ has $n$-dimensional measure zero. That is: $\forall \delta >0$, there exists a countable cover of $\alternativeParametrisation(S \cap U)$ consisting of open rectangles, the sum of whose volumes is less than $\delta$. 
    We have the following result based on Sard's theorem: 
    \begin{proposition}
        If $\Smooth(\cM)$ is a manifold, then the image of the set of points for which $\phi$ is not proper has measure zero in $\Smooth(\cM)$.  
    \end{proposition}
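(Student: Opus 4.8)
The plan is to reduce the proposition to Sard's theorem by restricting $\phi$ to the preimage of the non-singular locus and recognising ``$\phi$ not proper at $\xi$'' as synonymous with ``$\xi$ is a critical point.''

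First I would set $U \coloneqq \phi^{-1}(\Smooth(\cM))$. Since non-singularity is an open condition, $\Smooth(\cM)$ is open in $\cM$, and as $\phi$ is continuous and $\cM$ carries the subspace topology of $\cZ$, the set $U$ is open in $\Xi$ and hence a smooth $d$-dimensional manifold. Because each point of $\Smooth(\cM)$ is locally an embedded submanifold of $\cZ$ and $\cM$ coincides with $\Smooth(\cM)$ near such a point, smoothness of $\phi|_U$ as a map \emph{into} $\Smooth(\cM)$ can be checked locally and follows from the smoothness of $\phi$ into $\cZ$. Thus $\phi|_U \colon U \to \Smooth(\cM)$ is a smooth map between manifolds, which is the map to which I would apply Sard's theorem.

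Next I would identify the non-proper points of $U$ with the critical points of $\phi|_U$. The key observation is that for $\xi \in U$ the image of $d\phi_\xi$ equals $\vspan\{\partial_i(\xi)\}_i$, and this span always lies inside $T_{\phi(\xi)}\Smooth(\cM) = T_{\phi(\xi)}\cM$ because $\phi$ takes values in $\cM$. Consequently $\phi$ is proper in $\xi$ exactly when $d(\phi|_U)_\xi$ is surjective, i.e. when $\xi$ is a regular point of $\phi|_U$; equivalently, the non-proper locus $S \subset U$ is precisely the set of critical points of $\phi|_U$. By the definition of critical value, its image $\phi(S)$ coincides with the set of critical values of $\phi|_U$, and Sard's theorem then gives that this set has measure zero in $\Smooth(\cM)$, which is exactly the assertion. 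Parameters mapping to singular points fall outside $\Smooth(\cM)$ and hence do not affect its measure.

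The step requiring the most care is the identification in the previous paragraph: one must verify that passing from $d\phi_\xi$, viewed as landing in $T_{\phi(\xi)}\cZ$, to $d(\phi|_U)_\xi$ does not alter the image, which rests on the containment $\vspan\{\partial_i(\xi)\}_i \subseteq T_{\phi(\xi)}\Smooth(\cM)$ afforded by the local embedding. A secondary point worth flagging is dimension bookkeeping: if $\dim \Smooth(\cM) > d$ then every point of $U$ is critical, but Sard's theorem still applies and forces the whole image $\phi(U)$ to have measure zero in $\Smooth(\cM)$, so the conclusion holds \emph{a fortiori}.
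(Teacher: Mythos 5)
Your proof is correct and follows essentially the same route as the paper's: restrict $\phi$ to the open set $\phi^{-1}(\Smooth(\cM))$, identify the non-proper points with the critical points of this restriction, and apply Sard's theorem. The extra care you take with the smoothness of $\phi|_U$ as a map into $\Smooth(\cM)$ and with the dimension edge case only fills in details the paper leaves implicit.
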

    \begin{proof}
        From the definition of $\Smooth(\cM)$ we know that for every $p \in \Smooth(\cM)$ there exists a $U_p$ open in $\cZ$ such that $U_p \cap \cM$ is an embedded submanifold of $\cZ$. Let $U \coloneqq \bigcup_{p\in \Smooth(\cM)} U_p$. Note that: $U \cap \cM = \Smooth(\cM)$ and therefore $\phi^{-1}(U) = \phi^{-1}(\Smooth(\cM))$. Since $U$ is open in $\cZ$, $\phi^{-1}(\Smooth(\cM))$ is an open subset of $\Xi$ and thus an embedded submanifold. Therefore we can consider the map:
        \begin{equation}
            \phi|_{\phi^{-1}(\Smooth(\cM))}: \phi^{-1}(\Smooth(\cM)) \to \Smooth(\cM)
        \end{equation}
        and note that the image of the set of points for which $\phi$ is not proper is equal to the set of critical values of $\phi|_{\phi^{-1}(\Smooth(\cM))}$ in $\Smooth(\cM)$. A simple application of Sard's theorem gives the result.
    \end{proof}
    This proposition implies that when $\Smooth(\cM)$ is a manifold, the set of points for which the pushforward of the natural parameter gradient is unequal to the natural gradient has measure zero in $\Smooth(\cM)$.

\section{Conclusion}
In this paper we have studied the natural parameter gradient, which was defined as the update direction of the natural gradient method, and its pushforward to the model in an overparametrised setting. We have seen that the latter is equal to the natural gradient under certain conditions. Furthermore we have proposed different notions of invariance and studied whether the natural parameter gradient satisfies these. From the perspective of the model, we have seen that the natural parameter gradient is reparametrisation invariant but that it is not parametrisation independent. Additionally, we saw that the natural parameter gradient is not reparametrisation invariant on the parameter space. We have argued, however, that this notion is less suitable in an overparametrised setting since multiple vectors on the parameter space can correspond to the same vector on the model. Finally we have given some practical considerations for the natural gradient method.

\section*{Acknowledgements}
JvO and NA acknowledge the support of the Deutsche Forschungsgemeinschaft Priority Programme “The Active Self” (SPP 2134). JM acknowledges support by the ERC under the European Union’s Horizon 2020 research and innovation programme (grant agreement no 757983), by the International Max Planck Research School for Mathematics in the Sciences and the Evangelisches Studienwerk Villigst e.V..

\printbibliography

\appendix

\section{Appendix}

\subsection{On the limitation of Definition \ref{parind}} \label{def4lim}

\begin{proposition}\label{parindprop}
    No non-trivial representation of a vector can be parametrisation independent in the sense of Definition \ref{parind}. More precisely, any representation satisfying Definition \ref{parind} has the property that for all parametrisation-coordinate pairs $\phi, \xi$ the following holds:
    \begin{equation}
        d\phi_\xi \obj(\phi,\xi) = 0.
    \end{equation}
\end{proposition}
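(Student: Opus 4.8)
The plan is to exploit the fact that Definition \ref{parind} demands the pushforward equality \eqref{inveq} for \emph{all} parametrisation--coordinate pairs mapping to the same point, including non-diffeomorphic ones; this is precisely the extra strength over the reparametrisation invariance of Theorem \ref{cor1}. Fix an arbitrary pair $(\phi,\xi)$ and set $p=\phi(\xi)$. I will produce a second parametrisation $\alternativeParametrisation$ of $\cM$ together with a coordinate $\alternativeParameter$ with $\alternativeParametrisation(\alternativeParameter)=p$ and, crucially, with vanishing differential $d\alternativeParametrisation_{\alternativeParameter}=0$. Granting this, Definition \ref{parind} applied to the pair $(\phi,\xi)$ and $(\alternativeParametrisation,\alternativeParameter)$ yields
\[
 d\phi_\xi \obj(\phi,\xi) = d\alternativeParametrisation_{\alternativeParameter}\obj(\alternativeParametrisation,\alternativeParameter) = 0,
\]
because the right-hand side is the zero linear map evaluated on the vector $\obj(\alternativeParametrisation,\alternativeParameter)$. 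As $(\phi,\xi)$ was arbitrary, this is exactly the asserted conclusion.

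To build $\alternativeParametrisation$, I would precompose $\phi$ with a smooth \emph{surjection} of parameter spaces that has a critical point. Concretely, take $\alternativeParameterSpace=\Xi$, a point $\alternativeParameter=\xi$, and $\alternativeParametrisation=\phi\circ g$ where $g\colon\Xi\to\Xi$ is smooth, surjective, and satisfies $g(\xi)=\xi$ and $dg_{\xi}=0$. Surjectivity of $g$ guarantees $\alternativeParametrisation(\alternativeParameterSpace)=\phi(g(\Xi))=\phi(\Xi)=\cM$, so $\alternativeParametrisation$ is again a parametrisation of $\cM$, and by the chain rule $d\alternativeParametrisation_{\xi}=d\phi_{\xi}\circ dg_{\xi}=d\phi_\xi\circ 0=0$, as required. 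The existence of such a $g$ is elementary: in one variable $t\mapsto t^3$ is a smooth surjection of $\bR$ with vanishing derivative at the origin, and a radial version of this profile — squeezing a small ball around $\xi$ onto itself with a flat critical point at the centre and patched smoothly to the identity outside a slightly larger ball via a cut-off — gives a smooth surjective self-map of $\Xi$ fixing $\xi$ with $dg_\xi=0$.

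I expect the only delicate point to be verifying that $g$ can be taken surjective onto all of $\Xi$ while still having a critical point, so that $\alternativeParametrisation$ parametrises the whole model $\cM$; this is handled by the explicit cut-off construction above and is the step worth writing out carefully. It is also what rules out the tempting shortcut of a constant parametrisation, which has zero differential but fails to be surjective onto $\cM$ whenever $\cM$ is not a single point. Finally, it is worth recording why no contradiction with Theorem \ref{cor1} arises: the map $g$ has a critical point and is therefore not a diffeomorphism, so the pair $(\phi,\alternativeParametrisation)$ is not a reparametrisation in the sense of Definition \ref{invrep}, and the invariance guaranteed there simply does not apply.
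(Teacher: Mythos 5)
Your proposal is correct and follows essentially the same strategy as the paper: precompose $\phi$ with a smooth surjection that fixes a preimage of $p$ and has vanishing differential there, then invoke Definition \ref{parind} to force $d\phi_\xi\obj(\phi,\xi)=0$. The only difference is cosmetic -- the paper uses the coordinatewise cube map $\theta\mapsto((\theta^1)^3,\dots,(\theta^d)^3)+\xi^*$ on the domain $f^{-1}(\Xi)$ rather than your cutoff-patched radial flattening, but both supply the required critical point and surjectivity.
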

\begin{proof}
    Let $\cM$ be a model and assume that $\obj$ is a representation satisfying Definition \ref{parind}. Let $\phi$ be a parametrisation of $\cM$ and $\xi^* \in \Xi$ be a fixed (arbitrary) element on the domain of $\phi$. Now consider the following function:
    \begin{align}
        f\colon \bR^d &\to \bR^d\\
        \theta = \left(\theta^1, .., \theta^d\right) &\mapsto \left(\left(\theta^1\right)^3, ..., \left(\theta^d\right)^3\right) + \xi^*.
    \end{align} 
    
    We define $\Theta = f^{-1}(\Xi)$ and $\psi = \phi \circ f|_{\Theta}$. 
    First note that, since $f$ is continuous, $\Theta$ is an open set. 
    Secondly, since $f$ is surjective, we have $\psi(\Theta) = \cM$. Therefore $\psi$ is a parametrisation of $\cM$. 
    It is easy to see that the differential of $f$ at $\theta = 0$,  $df_0$, is equal to zero and therefore by the chain rule we have $d\psi_0 = d(\phi \circ f|_{\Theta})_0 = d\phi_{f(0)} \circ df_0 = 0$. 
    Furthermore we have: $\psi(0) = \phi(\xi^*)$. Therefore in order for $\obj$ to satisfy equation \eqref{inveq}  we need that: 
    \begin{equation}
        d\phi_\xi^* \obj(\phi,\xi^*) = d\psi_0 \obj(\psi,0) = 0.
    \end{equation}
    Since $\xi^*$ was chosen arbitrarily, this implies that $\obj$ is a trivial representation. 
\end{proof}

\begin{remark}
    The function $f$ used in the proof above is actually a homeomorphism since it is a continuous bijection and its inverse, given by $f^{-1}(\xi) = \left(\sqrt[3]{\xi^1 - \left(\xi^*\right)^1},..., \sqrt[3]{\xi^d - \left(\xi^*\right)^d} \right)$, is also continuous. The inverse is however not differentiable and therefore $f$ is not a diffeomorphism, which is required for Definition \ref{invrep}.
\end{remark}

\subsection{Example calculation of parametrisation dependence on the model} \label{exampleCalculationModel}
We illustrate Example 2 in Section \ref{invman} with a specific calculation. Let us consider the following parametrisation:
\begin{align}
    \phi\colon (-1/8\pi, 5/8\pi) &\to (\bR^2, \bar{g}) \\
    \xi &\mapsto (x,y) = (\sin(2\xi), \sin(\xi)).
\end{align}
This gives $\xi_1 = 0, \xi_2 = \frac12 \pi$ in the above discussion. We get the following calculation for $\natpmgrada{0}$: 
\begin{align}
    \partial(\xi) &= d\phi_\xi \left(\left.\der{\xi}\right|_\xi\right) \\
    &= 2\cos(2\xi) \left.\der{x}\right|_{\phi(\xi)} + \cos(\xi) \left.\der{y}\right|_{\phi(\xi)}\\
    \partial(\xi_1) &= 2 \left.\der{x}\right|_{(0,0)} + \left.\der{y}\right|_{(0,0)}\\
    G(\xi_1) &= 2^2 + 1^1 = 5\\
    \natpmgrada{\xi_1} &=  \frac15 \left.\der{\xi}\right|_{\xi=0} \bigg(\cL(\sin(2\xi), \sin(\xi))\bigg) \left(2 \left.\der{x}\right|_{(0,0)} + \left.\der{y}\right|_{(0,0)} \right). \label{semigrad1}
\end{align}
Now let:
\begin{align}
    f\colon (-1/8\pi, 5/8\pi) &\to (-1/8\pi, 5/8\pi)\\
    \theta &\mapsto -(\theta-\frac14\pi). 
\end{align} 
This implies that $\theta_1 = f^{-1}(\xi_2) = 0$. We define the alternative parametrisation $\alternativeParametrisation = \phi \circ f$. Note that we have $\alternativeParametrisation(\theta) = (-\sin(2\theta), \sin(\theta))$ and thus $\alternativeParametrisation(\theta_1) = \phi(\xi_1) = (0,0)$. A similar calculation as before gives: 
\begin{align}
    \alternativePartial(\theta_1) &= -2 \left.\der{x}\right|_{(0,0)} + \left.\der{y}\right|_{(0,0)}\\
    \natapmgrada{\theta_1} &=  \frac15 \left.\der{\theta}\right|_{\theta=0} \bigg(\cL(-\sin(2\theta), \sin(\theta))\bigg) \left(-2 \left.\der{x}\right|_{(0,0)} + \left.\der{y}\right|_{(0,0)} \right). \label{semigrad2}
\end{align}
Note that because $\partial(\xi_1) \neq \alternativePartial(\theta_1)$, \eqref{semigrad1} and \eqref{semigrad2} are not equal to each other. We can therefore conclude that the natural parameter gradient is not parametrisation independent.

\subsection{Example calculation of reparametrisation (in)variance on the parameter space} \label{exampleCalculationParameterSpace}

We illustrate the discussion in Section \ref{paraminvparamspacessec} with a specific calculation. Let us consider the following setting:
\begin{align}
    \cZ &= \Xi = \alternativeParameterSpace = \bR^2 \\
    \phi(\xi_1, \xi_2) &= (\xi_1 + \xi_2, 0)\\
    f({\alternativeParameter}_1, {\alternativeParameter}_2) &= (2{\alternativeParameter}_1, {\alternativeParameter}_2)\\
    \cL(x,y) &= x^2
\end{align}
Plugging this into the expressions derived above gives: 
\begin{align}
    \alternativeParametrisation({\alternativeParameter}_1, {\alternativeParameter}_2) &= (2{\alternativeParameter}_1 + {\alternativeParameter}_2, 0)\\
    \partial_1(\xi) &= \partial_2(\xi) = \left.\der{x}\right|_{\phi(\xi)} &F({\alternativeParameter}) = \begin{bmatrix}
        2 & 0 \\
        0 & 1
        \end{bmatrix}\\
    \alternativePartial_1({\alternativeParameter}) &= 2\left.\der{x}\right|_{\alternativeParametrisation({\alternativeParameter})} &G(\xi) = \begin{bmatrix}
        1 & 1 \\
        1 & 1
        \end{bmatrix}\\
    \alternativePartial_2({\alternativeParameter}) &= \left.\der{x}\right|_{\alternativeParametrisation({\alternativeParameter})} &\alternativeG({\alternativeParameter}) = \begin{bmatrix}
        4 & 2 \\
        2 & 1
        \end{bmatrix}\\
    \nabla_\xi \cL &= (2(\xi_1 + \xi_2), 2(\xi_1 + \xi_2)) \\
    \alternativeNabla_{\alternativeParameter} \cL &= (4(2{\alternativeParameter}_1 + {\alternativeParameter}_2), 2(2{\alternativeParameter}_1 + {\alternativeParameter}_2)). 
\end{align}
Now we fix ${\alternativeParameter} = (1,1)$ and $\xi = f({\alternativeParameter}) = (2,1)$. We start by computing $y_\Xi$. From the above we know that:
\begin{equation}
    y_\Xi = \argmin_y\{ ||y|| : G(\xi) y = \nabla_\xi \cL \}.
\end{equation}
It can be easily verified that this gives $y_\Xi = (3,3)$. For $y_{\alternativeParameterSpace}$ we get: 
\begin{equation}
    y_{\alternativeParameterSpace} = \argmin_y\{ ||\left( F^T(\theta)\right)^{-1} y|| : G(\xi) y = \nabla_\xi \cL \},
\end{equation}
which gives: $y_{\alternativeParameterSpace} = (4 \frac45, 1\frac15)$. Evidently we have $y_\Xi \neq y_{\alternativeParameterSpace}$. Note however that when we map the difference of the two gradient vectors from $T_{(2,1)} \Xi$ to $T_{(3,0)}\cM$ through $d\phi_{(2,1)}$ we get: 
\begin{align}
    d\phi_\xi \left(df_{\alternativeParameter} \natapgrad - \natpgrad\right) &=  \left(y_{\alternativeParameterSpace} - y_\Xi\right)^i \partial_i(\xi)\\
    &= (4\frac45 - 3) \der{x}|_{(3,0)} + (1\frac15 - 3) \der{x}|_{(3,0)} \\
    &= 0.
\end{align}
This shows that the natural parameter gradient is in general not reparametrisation invariant on the parameter space, but that the dependency on the parametrisation disappears when the gradient is mapped to the model.

\end{document}